\documentclass{article}
\usepackage{spconf,amsmath,amssymb,graphicx,bm,dsfont}
\graphicspath{{./}{../}}
\usepackage{xcolor}
\definecolor{hcpink}{RGB}{225,40,150}
\usepackage{booktabs}
\usepackage{mysymbol}
\usepackage{algorithm}
\usepackage{algorithmic}
\usepackage[hidelinks]{hyperref}
\usepackage{mathtools} 

\newcommand{\EE}{\mathbb{E}}

\newcommand{\clip}{\mathrm{clip}}
\newcommand{\pos}[1]{\left[#1\right]_{+}}

\title{Feasible Flow Matching for Graph Reconstruction via Within-Sampling Primal-Dual Guidance}

\name{Haoming Chen$^{\dagger}$ \qquad Nicolas Zilberstein$^{\dagger}$ \qquad Santiago Paternain$^{\star}$ \qquad Santiago Segarra$^{\dagger}$\thanks{Emails: \{hc80, nzilberstein, segarra\}@rice.edu, paters@rpi.edu}}
\address{$^{\dagger}$Rice University, Houston, TX, USA \qquad $^{\star}$Rensselaer Polytechnic Institute, Troy, NY, USA}

\begin{document}
\ninept
\maketitle

\begin{abstract}
Graph reconstruction from partial observations often comes with structural side information, such as degree bounds, triangle counts, or an edge-density band.
Prior-Informed Flow Matching (PIFM) reconstructs graphs by transporting a local prior toward the graph distribution, but it provides no mechanism to incorporate this side information.
We put forth Constrained Primal-Dual PIFM (CPD-PIFM), which augments the sampler with Lagrange multipliers that evolve along each trajectory.
The multipliers respond to constraint violations at a predicted endpoint and guide subsequent sampling steps without retraining.
We prove that the sampler inherits PIFM's permutation equivariance and bound its expected terminal slack by a term that decays as the inverse square root of the number of steps, plus two approximation terms.
On three link-prediction benchmarks and nine combinations of datasets and constraints, CPD-PIFM raises feasibility by 11--26 percentage points and remains competitive with fixed guidance without selecting a separate multiplier for each constraint.
\end{abstract}

\begin{keywords}
Graph reconstruction, flow matching, constrained sampling, primal-dual methods, graph signal processing.
\end{keywords}

\section{Introduction}
\label{sec:intro}

Reconstructing the topology of a graph from partial observations is a longstanding inverse problem in signal processing and machine learning~\cite{segarra2017network, dong2016learning, mateos2019connecting}.
Beyond the observed edges, a practitioner may know that the graph respects degree constraints, contains a minimum number of triangles, or lies in an edge-density band.
A reconstruction that violates such knowledge is structurally incorrect even if its average edge-wise accuracy is high.
When the constraints describe properties of the true graph, enforcing them incorporates additional information into an ill-posed inverse problem~\cite{roddenberry2021network, navarro2022joint}.

Guidance mechanisms can incorporate structural requirements into trained generative models.
For graphs, Graph Guided Diffusion (GGDiff)~\cite{tenorio2025graph} uses a reward whose weight is tuned, while PRODIGY~\cite{sharma2024diffuse} projects iterates onto a feasible set.
The latter requires a tractable projection, which may be unavailable for the desired structural constraints.
Adaptive dual variables have also been used during sampling for inverse problems~\cite{chung2023solving, kim2025dual}, discrete decoding~\cite{tomasi2026primaldual}, and continuous-diffusion generation~\cite{hadou2026constrained}.
These methods motivate adapting constraint pressure within a sampling trajectory.

For reconstruction from partial graph observations, Prior-Informed Flow Matching (PIFM)~\cite{chen2026pifm} combines a local structural prior with a rectified flow~\cite{liuflow, lipmanflow}.
Motivated by the distortion--perception tradeoff~\cite{blau2018perception, freirich2021distortion}, it transports the distribution of prior estimates toward the graph distribution.
However, PIFM does not explicitly enforce additional structural constraints.
Adding fixed penalties requires choosing weights, sampling, checking feasibility, and retuning; and searching for separate weights becomes costly as the number of constraints grows.

We put forth Constrained Primal-Dual PIFM (CPD-PIFM), which adapts Lagrange multipliers during reconstruction.
At each step, the sampler predicts its endpoint, evaluates constraint violations, and uses them to update the multipliers for subsequent steps.
Guidance can use differentiable surrogates or zeroth-order estimates~\cite{tenorio2025graph}, depending on the constraint, but we focus on the former throughout.
The trained PIFM model is reused, and observed entries are re-imposed after every step.
The multipliers therefore control structural properties of the reconstruction while observation consistency holds by construction.

Our contributions are as follows.
(i) We develop CPD-PIFM, a primal-dual graph reconstruction method whose multipliers evolve within each sampling trajectory in response to predicted terminal violations.
(ii) We establish permutation equivariance and an expected terminal slack bound with an $O(1/\sqrt{K})$ term and two approximation terms, where $K$ is the number of integration steps.
(iii) Through experiments on real data, we demonstrate substantial improvement in the feasibility achieved by CPD-PIFM while remaining competitive in terms of accuracy and realism.

\section{Problem Formulation}
\label{sec:formulation}

We represent an undirected graph with $N$ nodes by its binary, symmetric, zero-diagonal adjacency matrix $\bbA \in \{0,1\}^{N \times N}$.
The observation is $\bbA^{\ccalO} = \xi \odot \bbA$, where $\xi$ is a symmetric binary mask that equals one on observed node pairs and $\odot$ denotes the Hadamard product.

\noindent
\textbf{Constrained reconstruction.}
Let $\ccalA(\bbA^{\ccalO}, \xi)$ be the set of symmetric, zero-diagonal matrices $\bbX \in [0,1]^{N \times N}$ satisfying $\xi\odot\bbX=\bbA^{\ccalO}$.
With $m$ constraints indexed by $[m]=\{1,\ldots,m\}$, we study
\begin{align}
\label{eq:crec}
\min_{p} \;& \EE_p\big[\Delta(\bbA, \hbA)\big] \\
\text{subject to} \;\;
& \mathrm{supp}\, p \subseteq \ccalA(\bbA^{\ccalO}, \xi), \nonumber\\
& \EE_p\big[c_l(\hbA) \mid \bbA^{\ccalO}, \xi\big] \le \epsilon_l, \;\; l \in [m], \nonumber
\end{align}
where $p=p(\hbA\mid\bbA^{\ccalO},\xi)$ is a conditional reconstruction distribution and $\Delta$ measures reconstruction error.
We focus on the mean squared error (MSE) loss throughout.
Each statistic $c_l$ has a budget $\epsilon_l$ and may be non-differentiable.
Examples include maximum degree, edge density, and triangle count.
For instance, for a degree cap, $c_l(\hbA)$ is the maximum degree and $\epsilon_l$ is the allowed degree.
Statistics defined on binary graphs act on the binarization of their argument at threshold $0.5$.


\section{CPD-PIFM}
\label{sec:method}

CPD-PIFM augments PIFM's sampling dynamics with multipliers that respond to predicted constraint violations.

\noindent
\textbf{PIFM background.}
PIFM~\cite{chen2026pifm} is a flow matching model with a data-dependent source distribution~\cite{interpolant}.
It first uses a local estimator $f_{\text{prior}}(\bbA^{\ccalO})$ to approximate the posterior mean $\EE[\bbA\mid\bbA^{\ccalO},\xi]$ on the unobserved entries.
This estimator can be a pretrained graph neural network (GNN)~\cite{GraphSAGE10.5555/3294771.3294869, wangneural} or a graphon estimator~\cite{sigl}.
A rectified flow then transports a Gaussian source centered on this estimate toward the distribution of true graphs $\bbA$.
More precisely, PIFM starts from the source sample
\begin{equation}
\label{eq:source}
\bbA_0 = \bbA^{\ccalO} + (\bbone - \xi) \odot \big(f_{\text{prior}}(\bbA^{\ccalO}) + \bbepsilon_s\big),
\end{equation}
where $\bbone$ is the $N\times N$ matrix of ones and $\bbepsilon_s$ is isotropic Gaussian noise on the symmetric, zero-diagonal coordinates.
The target graph is $\bbA_1=\bbA$.
The velocity field $v_\theta$ is trained to predict $\bbA_1-\bbA_0$ on the linear paths $\bbA_t=(1-t)\bbA_0+t\bbA_1$ by minimizing the conditional flow matching loss~\cite{lipmanflow}.
Once $v_\theta$ is trained, sampling produces a trajectory $\hbA_t$ by integrating this velocity with $K$ Euler steps, clipping to $[0,1]$ and re-imposing observed entries after each step.
The resulting sampler targets a distribution $p$ that satisfies the support constraint in~\eqref{eq:crec} but ignores the statistic budgets.
CPD-PIFM reuses its prior and trained flow.

\noindent
\textbf{Endpoint prediction.}
To guide a trajectory toward the budget constraints, we predict the terminal graph from the current state.
Extrapolating the current velocity to $t=1$ yields the conditional mean of the terminal graph given the current state~\cite{martin2025pnpflow}, which we clip to $[0,1]$:
\begin{equation}
\label{eq:extrap}
\bar{\bbA}_{1|t}=\clip\big(\hbA_t+(1-t)v_\theta(\hbA_t,t),0,1\big).
\end{equation}
We then re-impose the observed entries on this prediction:
$\hbA_{1|t}=\bbA^{\ccalO}+(\bbone-\xi)\odot\bar{\bbA}_{1|t}$.
This ensures that constraints are evaluated on a graph that preserves the known entries.
In contrast to PIFM, which integrates $v_\theta$ directly, CPD-PIFM additionally uses~\eqref{eq:extrap} to evaluate terminal constraints during sampling.

\noindent
\textbf{Constraint evaluation and guidance.}
For each constraint $l\in[m]$ in~\eqref{eq:crec}, define the normalized slack $u_l(\bbX)=(c_l(\bbX)-\epsilon_l)/S_l$, where $S_l>0$ is fixed from a known range or training statistics.
A positive slack indicates a violation.
Let $\bbu(\bbX)\in\reals^m$ collect these slacks and let $\hat{\bbu}_k=\bbu(\hbA_{1|t})$ at $t=t_k=k/K$.
Each constraint $l$ has a nonnegative multiplier $\eta_{l,k}$, stacked in $\bbeta_k\in\reals^m_{+}$ with $\bbeta_0=\bbzero$.
As in Lagrangian methods, these multipliers weight each constraint's guidance.
For discrete statistics, binarization at $0.5$ makes the slack gradient zero almost everywhere.
We therefore assume differentiable surrogates $\tilde{u}_l$ for all slacks and evaluate their gradients at $\hbA_{1|t}$;
for example, row sums give degrees and log-sum-exp approximates their maximum.
The guidance direction is
\begin{equation}
\label{eq:guidance}
\bbg_k = -\sum_{l=1}^{m}\eta_{l,k}\,\frac{\Pi_\xi\big(\nabla_{\hbA_{1|t}}\tilde{u}_l(\hbA_{1|t})\big)}{\big\|\Pi_\xi\big(\nabla_{\hbA_{1|t}}\tilde{u}_l(\hbA_{1|t})\big)\big\|_F},
\end{equation}
where $\Pi_\xi$ projects onto the masked, symmetric, zero-diagonal subspace:
$[\Pi_\xi(\bbX)]_{ij}=(1-\xi_{ij})(X_{ij}+X_{ji})/2$ for $i\ne j$, with zero diagonal.
By convention, a term in~\eqref{eq:guidance} is zero when its projected gradient vanishes.
Each projected gradient is normalized before weighting by $\eta_{l,k}$, so gradient magnitudes do not set the relative guidance strengths.

\noindent
\textbf{Coupled sampling updates.}
Exact slacks $\hat{\bbu}_k$ update the multipliers; the guided velocity updates the state.
With dual step size $\rho_k$ and guidance strength $\lambda_k$, the updates at step $k$ are
\begin{align}
\bbeta_{k+1} &= \pos{\bbeta_k+\rho_k\hat{\bbu}_k}, \label{eq:dual-update}\\
\hbA_{t+1/K} &= \clip\big(\hbA_t+\tfrac{1}{K}\big[v_\theta(\hbA_t,t)+\lambda_k\bbg_k\big],0,1\big).
\label{eq:controlled-step}
\end{align}
Here $\pos{\cdot}$ denotes projection onto the nonnegative orthant.
The new multipliers $\bbeta_{k+1}$ affect guidance at step $k+1$.
A warm-up time $\tau_0$ can defer~\eqref{eq:dual-update} until $t\ge \tau_0$ when early endpoint predictions are unreliable; before then, $\bbeta_{k+1}=\bbeta_k$.
After~\eqref{eq:controlled-step}, we re-impose the observed entries as in PIFM.
Clipping and restoring observations ensure $\hbA\in\ccalA(\bbA^{\ccalO},\xi)$ after each step.

Taken together, the pair $(\hbA_t,\bbeta_k)$ evolves as a coupled state, as in state augmentation for safe control and constrained reinforcement learning~\cite{calvofullana2024state, paternain2019constrained, chamon2020probably} and primal-dual inference for constrained diffusion~\cite{hadou2026constrained}.
At $K=1$, guidance uses $\bbeta_0=\bbzero$ and the method reduces to PIFM, so adaptation requires $K>1$.


\noindent
\textbf{Guidance schedule.}
We use $\lambda_k=\bar\lambda/(1-t_k)$ with $\bar\lambda>0$; this increases guidance as the endpoint estimate $\hbA_{1|t}$, on which the constraints are evaluated, becomes more reliable.
No separate fixed multiplier is selected for each constraint; only $\bar\lambda$ is tuned during validation.

\begin{algorithm}[t]
\small
\caption{CPD-PIFM}
\label{alg:cpd}
\begin{algorithmic}[1]
\STATE \textbf{Training:} identical to PIFM; no retraining for any constraints.
\STATE \textbf{Input:} $\bbA^{\ccalO}$, $\xi$; constraints and surrogates $\{(c_l, \epsilon_l, \tilde{u}_l)\}_{l=1}^{m}$; steps $K$; schedules $\rho_k$, $\lambda_k$; warm-up $\tau_0$.
\STATE Initialize $\hbA \leftarrow \bbA^{\ccalO} + (\bbone - \xi) \odot \big(f_{\text{prior}}(\bbA^{\ccalO}) + \bbepsilon_s\big)$; $\bbeta \leftarrow \bbzero$.
\FOR{$k \leftarrow 0, \ldots, K - 1$}
    \STATE $t \leftarrow k / K$
    \STATE $\hbA_{1|t} \leftarrow \bbA^{\ccalO} + (\bbone - \xi) \odot \bar{\bbA}_{1|t}$ with $\bar{\bbA}_{1|t}$ from~\eqref{eq:extrap}
    \STATE $\bbg_k \leftarrow$ guidance from~\eqref{eq:guidance} with current $\bbeta$
    \IF{$t \ge \tau_0$}
        \STATE $\hat{\bbu}_k \leftarrow \bbu\big(\hbA_{1|t}\big)$ \hfill \textit{// normalized; binarized for discrete $c_l$}
        \STATE $\bbeta \leftarrow \pos{\bbeta + \rho_k\, \hat{\bbu}_k}$ \hfill \textit{// takes effect at step $k+1$}
    \ENDIF
    \STATE $\hbA \leftarrow \clip\big(\hbA + \tfrac{1}{K}\big[v_{\theta}(\hbA, t) + \lambda_k \bbg_k\big], 0, 1\big)$
    \STATE $\hbA \leftarrow \bbA^{\ccalO} + (\bbone - \xi) \odot \hbA$ \hfill \textit{// re-impose observed entries}
\ENDFOR
\STATE Return $\hbA$
\end{algorithmic}
\end{algorithm}

\section{Theoretical Guarantees}
\label{sec:theory}

\begin{figure*}[t]
\centering
\includegraphics[width=\textwidth]{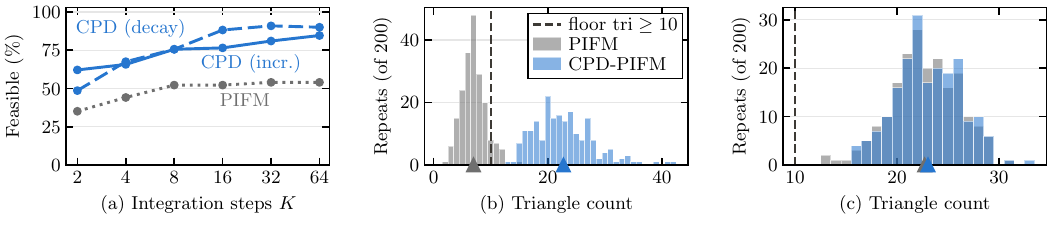}

\caption{(a) PROTEINS with competing edge-floor and degree-cap constraints. Curves show feasibility versus $K$ for PIFM and for CPD-PIFM with increasing and decaying schedules for $\lambda_k$. (b)--(c) ENZYMES with $\mathrm{tri}\ge 10$. Each panel shows triangle counts on full reconstructions from 200 paired samples of one partially observed graph; only the Gaussian source noise changes. In (b),  $P(\mathrm{feasible})$ changes from $0.105$ to $1.00$, and the mean (triangle markers) changes from $7.0$ to $22.7$. In (c), feasibility stays at $1.00$, and the mean changes from $22.7$ to $23.0$.}
\label{fig:k-coupling}
\end{figure*}

Permutation equivariance ensures that relabeling the nodes only relabels the reconstruction.
We establish this property under two assumptions.
\\
\emph{(A1)}~$f_{\text{prior}}$ and $v_\theta$ are permutation-equivariant and map symmetric zero-diagonal matrices to symmetric zero-diagonal matrices.
\\
\emph{(A2)}~Each $c_l$ and each surrogate $\tilde{u}_l$ is permutation-invariant.\\
Assumption (A1) retains PIFM's symmetry assumptions on its prior and velocity field.
Assumption (A2) holds for the graph statistics mentioned in Section~\ref{sec:formulation} and their invariant surrogates.

\begin{theorem}[Equivariance is inherited]
\label{thm:equiv}
Let (A1)--(A2) hold and let $\mathrm{CPD}(\bbA^{\ccalO}, \xi)$ denote the output of Algorithm~\ref{alg:cpd}.
Then, for every permutation matrix $\bbP$,
\begin{equation}
\mathrm{CPD}\big(\bbP^\top \bbA^{\ccalO} \bbP,\, \bbP^\top \xi\, \bbP\big) \;\stackrel{d}{=}\; \bbP^\top\, \mathrm{CPD}\big(\bbA^{\ccalO}, \xi\big)\, \bbP.
\end{equation}
Equality holds pointwise when the source noise is zero.
\end{theorem}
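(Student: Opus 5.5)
We argue by coupling the two runs of Algorithm~\ref{alg:cpd} on the same source noise and proving by induction on $k$ that the state and the multipliers transform covariantly. Write $\sigma(\bbX)=\bbP^\top\bbX\bbP$. Run the algorithm on $(\sigma(\bbA^{\ccalO}),\sigma(\xi))$ with source noise $\sigma(\bbepsilon_s)$, and on $(\bbA^{\ccalO},\xi)$ with noise $\bbepsilon_s$. The isotropic Gaussian on the symmetric zero-diagonal coordinates is invariant under $\sigma$, since $\sigma$ permutes those coordinates. Hence $\sigma(\bbepsilon_s)\stackrel{d}{=}\bbepsilon_s$. It therefore suffices to show the pointwise identity $\mathrm{CPD}(\sigma(\bbA^{\ccalO}),\sigma(\xi);\sigma(\bbepsilon_s))=\sigma(\mathrm{CPD}(\bbA^{\ccalO},\xi;\bbepsilon_s))$. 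The distributional claim then follows, and setting $\bbepsilon_s=\bbzero$ gives the pointwise claim.

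\textbf{Inductive hypothesis.} Denote primed quantities for the permuted run. The hypothesis is $\hbA'_{t_k}=\sigma(\hbA_{t_k})$ and $\bbeta'_k=\bbeta_k$; the multipliers are invariant because they index constraints, not nodes. At $k=0$, $\bbeta_0=\bbzero$ in both runs. By (A1), $f_{\text{prior}}(\sigma(\bbA^{\ccalO}))=\sigma(f_{\text{prior}}(\bbA^{\ccalO}))$. Since $\sigma$ commutes with Hadamard products, sums and $\bbone$, equation~\eqref{eq:source} gives $\hbA'_0=\sigma(\hbA_0)$.

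\textbf{Inductive step.} We check each operation in one iteration.
\begin{enumerate}
\item \emph{Endpoint prediction.} By (A1), $v_\theta(\sigma(\bbX),t)=\sigma(v_\theta(\bbX,t))$. Entrywise clipping commutes with $\sigma$, so $\hbA'_{1|t}=\sigma(\hbA_{1|t})$.
\item \emph{Slacks and dual update.} By (A2), $c_l\circ\sigma=c_l$. Binarization at $0.5$ is entrywise, so it also commutes with $\sigma$. Hence $\hat{\bbu}'_k=\hat{\bbu}_k$, and~\eqref{eq:dual-update} gives $\bbeta'_{k+1}=\bbeta_{k+1}$. The warm-up test depends only on $t$, so it fires identically in both runs.
\item \emph{Guidance direction.} Differentiating $\tilde u_l\circ\sigma=\tilde u_l$ and using that $\sigma$ is linear and orthogonal (Frobenius) gives $\nabla\tilde u_l(\sigma(\bbX))=\sigma(\nabla\tilde u_l(\bbX))$. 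The projection formula shows $\Pi_{\sigma(\xi)}(\sigma(\bbY))=\sigma(\Pi_\xi(\bbY))$, because symmetrization, the mask, and the zero diagonal are all preserved under simultaneous row/column permutation. Frobenius norms are unchanged, so the vanishing-gradient convention applies to the same terms. Together with $\bbeta'_k=\bbeta_k$, this gives $\bbg'_k=\sigma(\bbg_k)$.
\item \emph{State update.} $\lambda_k$ depends only on $k$. Applying equivariance of $v_\theta$, clipping and re-imposition in~\eqref{eq:controlled-step} yields $\hbA'_{t_{k+1}}=\sigma(\hbA_{t_{k+1}})$.
\end{enumerate}
This closes the induction, and the output is $\sigma$ of the original output.

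\textbf{Main obstacle.} The delicate point is the gradient-equivariance step. It requires $\tilde u_l$ to be differentiable (or to use a consistent equivariant subgradient selection) at $\hbA_{1|t}$. It also needs the fact that the gradient of an invariant function is equivariant, $\nabla(\tilde u\circ\sigma)(\bbX)=\sigma^{*}\nabla\tilde u(\sigma(\bbX))$ with $\sigma^*=\sigma^{-1}$, and the sign of the permutation has to be tracked carefully here. The rest of the argument consists of bookkeeping that each elementary operation commutes with $\sigma$.
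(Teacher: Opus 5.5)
Your proposal is correct and follows the paper's proof essentially step for step: you couple the two runs through the relabeled source noise $\sigma(\bbepsilon_s)$, induct on $\hbA'_{t_k}=\sigma(\hbA_{t_k})$ and $\bbeta'_k=\bbeta_k$, and check that each operation commutes with $\sigma$ — endpoint extrapolation, the invariant slacks, the projected normalized gradients (via orthogonality of $\sigma$ and $\Pi_{\sigma(\xi)}\circ\sigma=\sigma\circ\Pi_\xi$), and the entrywise clipping and re-imposition. One small remark is that there is no sign to track in the gradient step: $\sigma$ only permutes coordinates, so its Frobenius adjoint is $\sigma^{-1}$, and the identity $\nabla\tilde u_l(\sigma(\bbX))=\sigma(\nabla\tilde u_l(\bbX))$ in your step 3 is already exact.
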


\emph{Proof.} Fix a permutation matrix $\bbP$ and write $\Phi(\bbX) = \bbP^\top \bbX \bbP$; $\Phi$ is orthogonal for the Frobenius inner product, maps the masked symmetric zero-diagonal subspace of $\xi$ to that of $\xi' = \Phi(\xi)$, and commutes with entrywise maps ($\clip$, binarization) and Hadamard products.
Couple the runs on $(\bbA^{\ccalO}, \xi)$ and $(\Phi(\bbA^{\ccalO}), \xi')$ by relabeling the source noise as $\bbepsilon_s' = \Phi(\bbepsilon_s)$.
We claim, by induction along Algorithm~\ref{alg:cpd}, that $\hbA'_t = \Phi(\hbA_t)$ and $\bbeta'_k = \bbeta_k$; the initialization satisfies both by equivariance of $f_{\text{prior}}$.
For the inductive step, the extrapolation relabels by equivariance of $v_\theta$.
The slacks coincide, $\hat{\bbu}'_k = \hat{\bbu}_k$, since binarization commutes with $\Phi$ and each $u_l$ is invariant by (A2), so the dual updates coincide.
The guidance relabels, $\bbg'_k = \Phi(\bbg_k)$, since invariance and orthogonality give $\nabla \tilde{u}_l \circ \Phi = \Phi \circ \nabla \tilde{u}_l$ and $\Pi_{\xi'} \circ \Phi = \Phi \circ \Pi_\xi$, while $\|\Phi(\bbX)\|_F=\|\bbX\|_F$ preserves the normalization in~\eqref{eq:guidance}.
Finally, the update~\eqref{eq:controlled-step} is a linear combination of relabeling terms followed by entrywise operations.
Hence $\mathrm{CPD}(\Phi(\bbA^{\ccalO}), \xi') = \Phi(\mathrm{CPD}(\bbA^{\ccalO}, \xi))$ pathwise under the coupling, and in distribution since the coupling preserves the law.
With zero source noise both runs are deterministic and the identity is pointwise.
\hfill$\blacksquare$

Theorem~\ref{thm:equiv} shows that invariant constraints preserve equivariance.
The reconstruction therefore respects relabeling of the nodes.

We next bound the expected terminal slack using three assumptions.\\
\emph{(A3)} The normalized slacks satisfy $|u_l(\bbX)| \le B$ on $[0,1]^{N \times N}$; let $\bar{B} = \sqrt{m}\,B$.\\
\emph{(A4)} Guidance effectiveness: there exists $\varepsilon_g \ge 0$ with
\begin{equation}
\label{eq:A4}
\EE\bigg[\frac{1}{K}\sum_{k=0}^{K-1} \langle \bbeta_k, \hat{\bbu}_k \rangle\bigg] \le \varepsilon_g.
\end{equation}
Here, $\hat{\bbu}_k=\bbu(\hbA_{1|t_k})$ is evaluated along the guided trajectory produced by~\eqref{eq:controlled-step}; (A4) bounds the expected time average of its multiplier-weighted sum by $\varepsilon_g$.\\
\emph{(A5)} Endpoint consistency: there exists $\bar{e} \ge 0$ such that, for every $l \in [m]$,
\begin{equation}
\label{eq:A5}
\EE\bigg[\frac{1}{K}\sum_{k=0}^{K-1} \big| u_l(\hbA) - \hat{u}_{l,k} \big|\bigg] \le \bar{e},
\end{equation}
where $\hbA$ is the realized terminal reconstruction.\\
Assumption (A3) holds for continuous statistics on the compact cube and for bounded discrete statistics.
The errors $\varepsilon_g$ and $\bar{e}$ in (A4)--(A5) quantify guidance effectiveness and the gap between predicted and realized terminal slacks.

\begin{theorem}[Terminal violation bound]
\label{thm:violation}
Let (A3)--(A5) hold, initialize $\bbeta_0 = \bbzero$, set $\rho_k = \rho =  1/(\bar{B}\sqrt{K})$, and take $\tau_0 = 0$. 
Then, for every constraint $l \in [m]$, the terminal normalized slack of Algorithm~\ref{alg:cpd} satisfies
\begin{equation}
\label{eq:violation-bound}
\EE\big[u_l\big(\hbA\big)\big] \;\le\; \frac{\bar{B}}{\sqrt{K}} \;+\; \varepsilon_g \;+\; \bar{e}.
\end{equation}
\end{theorem}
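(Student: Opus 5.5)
Our approach is the standard online-gradient-ascent (dual regret) argument applied to the projected dual update~\eqref{eq:dual-update}, followed by transferring the bound from predicted slacks $\hat{\bbu}_k$ to the realized terminal slack via (A5). Since $\tau_0=0$, the dual update runs at every step $k=0,\dots,K-1$. First fix $l\in[m]$ and note that projection onto the nonnegative orthant is coordinatewise, so $\eta_{l,k+1}=\max\{0,\eta_{l,k}+\rho\hat u_{l,k}\}\ge \eta_{l,k}+\rho\hat u_{l,k}$. Telescoping from $\eta_{l,0}=0$ then gives the pathwise inequality
\begin{equation*}
\frac{1}{K}\sum_{k=0}^{K-1}\hat u_{l,k}\;\le\;\frac{\eta_{l,K}}{\rho K}.
\end{equation*}
It therefore suffices to control $\eta_{l,K}$, or more conveniently $\|\bbeta_K\|_2\ge\eta_{l,K}$.

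Next we bound $\|\bbeta_K\|_2^2$ through the usual squared-norm expansion. Nonexpansiveness of the projection gives $\|\bbeta_{k+1}\|^2\le\|\bbeta_k\|^2+2\rho\langle\bbeta_k,\hat{\bbu}_k\rangle+\rho^2\|\hat{\bbu}_k\|^2$, and (A3) gives $\|\hat{\bbu}_k\|^2\le mB^2=\bar B^2$. Summing over $k$ with $\bbeta_0=\bbzero$ yields
\begin{equation*}
\|\bbeta_K\|_2^2\le 2\rho\sum_{k=0}^{K-1}\langle\bbeta_k,\hat{\bbu}_k\rangle+\rho^2K\bar B^2 .
\end{equation*}
We then take expectations and use Jensen's inequality, $\EE\|\bbeta_K\|\le\sqrt{\EE\|\bbeta_K\|^2}$. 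Invoking (A4) gives $\EE\|\bbeta_K\|\le\sqrt{2\rho K\varepsilon_g+\rho^2K\bar B^2}\le\sqrt{2\rho K\varepsilon_g}+\rho\sqrt K\bar B$. Dividing by $\rho K$ with $\rho=1/(\bar B\sqrt K)$ produces
\begin{equation*}
\frac{\bar B}{\sqrt K}+\sqrt{\frac{2\bar B\,\varepsilon_g}{\sqrt K}}.
\end{equation*}

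The main obstacle is that this cross term has the form $\sqrt{2\bar B\varepsilon_g/\sqrt K}$ rather than the additive $\varepsilon_g$ claimed in~\eqref{eq:violation-bound}. To close the gap, we will use AM--GM, $\sqrt{2ab}\le a+b$, with $a=\varepsilon_g$ and $b=\bar B/\sqrt K$ (up to constants). We will check whether constants then match exactly. If they do not, we will instead avoid Jensen: for each fixed sample path, $\eta_{l,K}/(\rho K)\le\|\bbeta_K\|/(\rho K)$, and we apply the elementary bound $\|\bbeta_K\|\le \|\bbeta_K\|^2/(2c)+c/2$ with $c=\rho K\bar B/\sqrt{K}\cdot$(a suitable factor) before taking expectations, which turns the $\langle\bbeta_k,\hat{\bbu}_k\rangle$ sum linearly into $\varepsilon_g$ via (A4). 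Concretely, choosing $c=\rho\sqrt K\bar B=1$ gives
\begin{equation*}
\frac{\EE\|\bbeta_K\|}{\rho K}\le\frac{\EE\|\bbeta_K\|^2+1}{2\rho K}\le\frac{2\rho K\varepsilon_g+2}{2\rho K}=\varepsilon_g+\frac{\bar B}{\sqrt K},
\end{equation*}
which is exactly the first two terms of the bound.

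Finally, we transfer the bound to the realized terminal slack. Write $u_l(\hbA)=\frac1K\sum_k\hat u_{l,k}+\frac1K\sum_k\big(u_l(\hbA)-\hat u_{l,k}\big)$, bound the second average by its absolute value, and take expectations. (A5) contributes $\bar e$, which combined with the previous step gives~\eqref{eq:violation-bound}.
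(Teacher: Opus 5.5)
Your final argument is correct and reaches exactly the stated constants, but you organize it differently from the paper. The paper runs the standard projected online-gradient regret inequality once, against the comparator $\bbeta^*=\bbe_l$. It expands $\|\bbeta_{k+1}-\bbe_l\|^2$, telescopes, and drops $\|\bbeta_K-\bbe_l\|^2\ge 0$, which gives $\sum_k \hat u_{l,k}\le 1/\rho+\sum_k\langle\bbeta_k,\hat{\bbu}_k\rangle$ in one step. You split this into two pieces. The first is a virtual-queue telescoping $\rho\sum_k\hat u_{l,k}\le\eta_{l,K}$, from $[x]_+\ge x$ coordinatewise. The second is a drift bound on $\|\bbeta_K\|^2$ against the comparator $\bbzero$. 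You then glue them with $\eta_{l,K}\le\|\bbeta_K\|\le(\|\bbeta_K\|^2+1)/2$.

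That gluing step is exactly the term $\|\bbeta_K-\bbe_l\|^2=\|\bbeta_K\|^2-2\eta_{l,K}+1\ge 0$ that the paper discards. So the two proofs use the same inequalities, regrouped, and yield the identical pathwise bound $\frac1K\sum_k\hat u_{l,k}\le \bar B/\sqrt K+\frac1K\sum_k\langle\bbeta_k,\hat{\bbu}_k\rangle$. Your transfer via (A5) then matches the paper verbatim.

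Your route buys an interpretable intermediate statement: the averaged predicted violation is controlled by the terminal multiplier, $\frac1K\sum_k\hat u_{l,k}\le\eta_{l,K}/(\rho K)$, which is observable at run time. The paper's route is shorter, is a direct citation of the Zinkevich regret bound, and makes it immediate to swap in other comparators.

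Drop the Jensen detour entirely. Taking expectations before linearizing gives $\bar B/\sqrt K+\sqrt{2\bar B\varepsilon_g/\sqrt K}$, and no AM--GM split recovers the stated bound from it; the best you can get is $\varepsilon_g+\tfrac32\bar B/\sqrt K$. What makes the constants come out exactly is applying the pathwise inequality $\|\bbeta_K\|\le(\|\bbeta_K\|^2+1)/2$ before taking expectations, so that (A4) enters linearly. Your choice $c=\rho\sqrt K\bar B=1$ is the right one.
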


\emph{Proof.} Set $\rho = 1/(\bar{B}\sqrt{K})$. 
The dual recursion $\bbeta_{k+1} = \pos{\bbeta_k + \rho \hat{\bbu}_k}$ is projected online gradient ascent against the slack sequence~\cite{zinkevich2003online, mahdavi2012trading}.
Fix a comparator $\bbeta^* \ge \bbzero$.
Since $\bbeta^*$ lies in the nonnegative orthant and the projection $\pos{\cdot}$ onto it is nonexpansive, $\|\bbeta_{k+1} - \bbeta^*\|^2 \le \|\bbeta_k + \rho \hat{\bbu}_k - \bbeta^*\|^2 = \|\bbeta_k - \bbeta^*\|^2 + 2\rho \langle \hat{\bbu}_k, \bbeta_k - \bbeta^* \rangle + \rho^2 \|\hat{\bbu}_k\|^2$, with $\|\hat{\bbu}_k\|^2 \le \bar{B}^2$ by (A3).
Summing over $k = 0, \ldots, K-1$, telescoping, dropping $\|\bbeta_K - \bbeta^*\|^2 \ge 0$, and using $\bbeta_0 = \bbzero$ gives the pathwise regret bound $\sum_{k} \langle \hat{\bbu}_k, \bbeta^* - \bbeta_k \rangle \le \|\bbeta^*\|^2/(2\rho) + \rho K \bar{B}^2/2$.
Take $\bbeta^* = \bbe_l$, the $l$-th standard basis vector, divide by $K$, and use $1/(2\rho K) = \rho \bar{B}^2/2 = \bar{B}/(2\sqrt{K})$ to obtain, pathwise, $\frac{1}{K}\sum_{k} \hat{u}_{l,k} \le \bar{B}/\sqrt{K} + \frac{1}{K}\sum_{k} \langle \bbeta_k, \hat{\bbu}_k \rangle$.
Since, identically, $u_l(\hbA) = \frac{1}{K}\sum_{k} \big[\hat{u}_{l,k} + \big(u_l(\hbA) - \hat{u}_{l,k}\big)\big] \le \frac{1}{K}\sum_{k} \hat{u}_{l,k} + \frac{1}{K}\sum_{k} \big|u_l(\hbA) - \hat{u}_{l,k}\big|$, combining the two previous equations and taking expectations, \eqref{eq:A4} and~\eqref{eq:A5} bound the two random averages by $\varepsilon_g$ and $\bar{e}$, which is~\eqref{eq:violation-bound}.
\hfill$\blacksquare$

The bound separates the cost of adapting the multipliers, $\bar B/\sqrt K$, from the guidance error $\varepsilon_g$ and endpoint error $\bar e$.
Only the first term is required to decay with $K$; thus, the theorem does not guarantee hard feasibility.
Finally, although the theorem is shown for $\tau_0=0$, an analogous result holds for $\tau_0 > 0$ where we set $\rho_k = 1/(\bar{B}\sqrt{M})$ on the $M = K - \lceil \tau_0 K \rceil \ge 1$ active dual steps; the bound then holds with $K$ replaced by $M$ and the averages in (A4)--(A5) taken over those steps.

\section{Experiments}
\label{sec:experiments}

\textbf{Setup.} To isolate the effect of sampling guidance, we reuse PIFM's datasets, masks, splits, GraphSAGE~\cite{GraphSAGE10.5555/3294771.3294869} priors, and trained velocity checkpoints.
We evaluate link prediction on ENZYMES, PROTEINS, and IMDB~\cite{TUDataset}.
In each graph, $50\%$ of node pairs are hidden uniformly at random.
Unless stated otherwise, all methods use Euler sampling with $K=32$ and $\tau_0=0$.
For each dataset and constraint setting, we select $\bar\lambda$ on the seed-0 validation split as the smallest value attaining the highest feasibility, and reuse it across seeds.
PIFM corresponds to $\bar\lambda=0$.
As a baseline without dual adaptation, fixed guidance replaces every multiplier $\eta_{l,k}$ by one constant $\eta$, keeping $\bar\lambda=1$, the schedule, and the normalized directions in~\eqref{eq:guidance}.
We report three metrics.
Feasibility is the fraction of test graphs whose reconstruction satisfies every constraint.
Accuracy is the area under the receiver operating characteristic curve (AUC) on unobserved pairs, averaged over test graphs whose ground truth satisfies all the constraints.
Realism is the squared maximum mean discrepancy (MMD) with a Gaussian earth-mover kernel between the degree histograms of all reconstructions and of the ground-truth graphs in that subset.
Unless noted, results are means and standard deviations (s.d.) over five seeds, each with its own split, masks, prior, checkpoint, and budgets.

\noindent
\textbf{Constraints.} We set all budgets $\epsilon_l$ from whole-graph training-split statistics and evaluate each $c_l$ on the full reconstruction.
Guidance itself acts only on unobserved entries through $\Pi_\xi$ in~\eqref{eq:guidance}.
The single constraints are a degree cap at $q_{0.9}$, where $q_\alpha$ is the $\alpha$-quantile of the training statistic, a triangle floor at $q_{0.1}$ (normalized by $\binom{N}{3}$ on PROTEINS), and an edge-density band between $q_{0.1}$ and $q_{0.9}$.
The mixture combines all three.
The competing pair combines an edge floor at $q_{0.25}$ with a degree cap at $q_{0.5}$ of the maximum degree on ENZYMES and $q_{0.8}$ of the maximum degree normalized by $N-1$ on PROTEINS.
The floor encourages more edges, while the cap limits their concentration.

\begin{table}[t]
\centering
\caption{Effect of CPD guidance by constraint at $K=32$, averaged over five test seeds. S.d.\ is shown for feasibility only; the s.d.\ of the paired AUC and MMD changes is at most $0.010$ and $0.07$.}
\label{tab:exp1}
\setlength{\tabcolsep}{2.3pt}
\renewcommand{\arraystretch}{1.05}
\begin{tabular*}{\columnwidth}{@{\extracolsep{\fill}}l@{\hspace{5pt}}rr@{\hspace{8pt}}rr@{\hspace{8pt}}rr@{}}
\toprule
 & \multicolumn{2}{c}{Feas.\ \% $\uparrow$} & \multicolumn{2}{c}{AUC $\uparrow$} & \multicolumn{2}{c}{MMD $\downarrow$} \\
\cmidrule(lr){2-3}\cmidrule(lr){4-5}\cmidrule(lr){6-7}
Constraint & PIFM & CPD & PIFM & CPD & PIFM & CPD \\
\midrule
\multicolumn{7}{@{}l}{\textit{ENZYMES}} \\
\;deg.\ cap & $83.2\pm 7.4$ & $\mathbf{99.4\pm 1.4}$ & .550 & .554 & .205 & .204 \\
\;tri.\ floor & $74.2\pm 4.0$ & $\mathbf{99.4\pm 1.4}$ & .552 & .579 & .222 & .318 \\
\;mixture & $47.1\pm 5.9$ & $\mathbf{71.0\pm 6.0}$ & .560 & .576 & .243 & .199 \\
\;competing & $43.9\pm 10.4$ & $\mathbf{69.7\pm 8.7}$ & .551 & .561 & .225 & .172 \\
\addlinespace[2pt]
\multicolumn{7}{@{}l}{\textit{IMDB}} \\
\;tri.\ floor & $84.0\pm 4.2$ & $\mathbf{99.6\pm 0.9}$ & .879 & .882 & .030 & .039 \\
\addlinespace[2pt]
\multicolumn{7}{@{}l}{\textit{PROTEINS}} \\
\;deg.\ cap & $88.6\pm 5.6$ & $\mathbf{99.6\pm 0.8}$ & .586 & .584 & .241 & .242 \\
\;tri.\ floor & $86.1\pm 5.6$ & $\mathbf{99.3\pm 1.6}$ & .590 & .593 & .272 & .253 \\
\;edge band & $80.7\pm 8.6$ & $\mathbf{96.1\pm 4.1}$ & .573 & .574 & .259 & .287 \\
\;competing & $53.6\pm 7.9$ & $\mathbf{72.5\pm 9.8}$ & .576 & .563 & .279 & .489 \\
\bottomrule
\end{tabular*}
\end{table}

\begin{table}[t]
\centering
\caption{Fixed guidance and CPD-PIFM at $K=32$, single seed. Both fixed multipliers $\eta$ are shared across all settings; PROTEINS edge-band gradients cancel under shared $\eta$, so fixed guidance equals PIFM. Feasibility is reported as a percentage; the highest in each setting is in bold.}
\label{tab:fixed}
\setlength{\tabcolsep}{1.4pt}
\renewcommand{\arraystretch}{1.05}
\begin{tabular*}{\columnwidth}{@{\extracolsep{\fill}}l@{\hspace{4pt}}rrr@{\hspace{6pt}}rrr@{\hspace{6pt}}rrr@{}}
\toprule
 & \multicolumn{3}{c}{Fixed $\eta=0.1$} & \multicolumn{3}{c}{Fixed $\eta=1.6$} & \multicolumn{3}{c}{CPD-PIFM} \\
\cmidrule(lr){2-4}\cmidrule(lr){5-7}\cmidrule(lr){8-10}
Constraint & Feas & AUC & MMD & Feas & AUC & MMD & Feas & AUC & MMD \\
\midrule
\multicolumn{10}{@{}l}{\textit{ENZYMES}} \\
deg.\ cap & 93.5 & .569 & .210 & \textbf{100} & .572 & .597 & \textbf{100} & .567 & .195 \\
tri.\ floor & 74.2 & .578 & .195 & \textbf{100} & .618 & 1.007 & \textbf{100} & .590 & .385 \\
mixture & 58.1 & .603 & .204 & \textbf{80.6} & .681 & .199 & 74.2 & .599 & .168 \\
competing & 58.1 & .567 & .213 & 74.2 & .610 & .081 & \textbf{80.6} & .568 & .184 \\
\addlinespace[2pt]
\multicolumn{10}{@{}l}{\textit{IMDB}} \\
tri.\ floor & 90.0 & .877 & .022 & \textbf{100} & .840 & .186 & \textbf{100} & .878 & .032 \\
\addlinespace[2pt]
\multicolumn{10}{@{}l}{\textit{PROTEINS}} \\
deg.\ cap & 87.5 & .609 & .265 & 98.2 & .578 & .706 & \textbf{100} & .598 & .263 \\
tri.\ floor & 91.1 & .638 & .264 & 98.2 & .647 & .552 & \textbf{100} & .605 & .262 \\
edge band & 80.4 & .592 & .242 & 80.4 & .592 & .242 & \textbf{96.4} & .595 & .298 \\
competing & 58.9 & .612 & .257 & 57.1 & .650 & .097 & \textbf{80.4} & .582 & .487 \\
\bottomrule
\end{tabular*}
\end{table}

\noindent
\textbf{Effect of integration steps.} Fig.~\ref{fig:k-coupling}(a) shows that CPD-PIFM feasibility improves as $K$ increases (in line with Theorem~\ref{thm:violation}) on the PROTEINS dataset with competing constraints.
The PIFM rate levels off near $54\%$.
With the increasing schedule for $\lambda_k$, feasibility rises from $62\%$ at $K=2$ to $85\%$ at $K=64$.
As an ablation, the decaying schedule $\lambda_k=\bar\lambda(1-t_k)$, which weights early guidance more, reaches higher feasibility at intermediate $K$ but is not monotone in $K$, whereas the increasing schedule improves steadily.
We keep the increasing schedule as the default because it applies guidance when the endpoint estimate is most reliable.
Across the seven single and mixed settings and five test seeds, the average feasibility gain over PIFM is $17.2$ percentage points at $K=32$.

\noindent
\textbf{Effect of guidance.} Figs.~\ref{fig:k-coupling}(b) and (c) illustrate how guidance affects two individual ENZYMES graphs.
For the first graph, $179$ of $200$ PIFM samples miss the triangle floor.
Guidance moves all 200 samples above the floor and raises the mean from $7.0$ to $22.7$ triangles.
The second graph is already feasible under PIFM.
Thus, guidance only minimally changes its mean from $22.7$ to $23.0$ triangles, while feasibility remains at $100\%$. 

\noindent
\textbf{Results across constraints.} Table~\ref{tab:exp1} shows higher feasibility in all nine settings, with gains from $11.1$ to $25.8$ percentage points.
Every single-constraint setting reaches at least $96\%$ feasibility, while the mixture and the competing pairs reach $70$--$73\%$.
Mean AUC changes range from $-0.013$ to $+0.027$; six of nine lie within $\pm0.01$.
Degree MMD rises in four settings, is nearly unchanged on the two degree caps, and falls in three.
In particular, CPD-PIFM improves all three metrics on the ENZYMES mixture and competing constraints.

\noindent
\textbf{Comparison with fixed guidance.} The two shared fixed weights in Table~\ref{tab:fixed} illustrate the tradeoff between feasibility and reconstruction quality across settings.
On the ENZYMES single constraints and the mixture, $\eta=1.6$ matches or exceeds CPD-PIFM in feasibility and AUC, although CPD-PIFM has lower MMD.
The same multiplier transfers poorly to IMDB and the PROTEINS degree cap, where CPD-PIFM matches or improves feasibility and gives better AUC and MMD.
Reducing $\eta$ to $0.1$ preserves degree MMD better in these two settings but lowers feasibility to $90.0\%$ on IMDB and $87.5\%$ on the PROTEINS degree cap, compared with $100\%$ for CPD-PIFM.
At $\eta=1.6$, CPD-PIFM matches or exceeds fixed-guidance feasibility in eight of nine settings and has lower MMD in six, while fixed guidance has higher AUC in six.
We also evaluated $\eta=25.6$ (not shown), which reduced AUC in seven settings and increased MMD in eight relative to the better smaller weight for each metric, while improving feasibility only in two.
Thus, fixed guidance needs setting-specific tuning, whereas CPD-PIFM remains competitive without choosing a separate fixed multiplier for each constraint.

\section{Conclusion}
\label{sec:conclusion}

We developed CPD-PIFM, which guides graph reconstruction with multipliers updated from predicted endpoint violations during sampling.
It reuses the trained flow, preserves observed entries, inherits permutation equivariance under invariant constraints, and needs one validated guidance scale.
Experiments show substantially higher feasibility with competitive accuracy and realism.

\section{Compliance with Ethical Standards}
This is a numerical study on publicly available graph datasets, for which no ethical approval was required.

\section{Acknowledgments}
This work was supported by NSF under Award CCF 2340481. The authors have no relevant financial or non-financial interests to disclose.
Claude (Anthropic) was used to assist with drafting and editing the text of Section~\ref{sec:experiments}, with experiment and plotting code, and with assembling Fig.~\ref{fig:k-coupling} and Tables~\ref{tab:exp1}--\ref{tab:fixed}.
All content was reviewed, verified, and revised by the authors, who take full responsibility for it.

\bibliographystyle{IEEEbib}
\bibliography{refs}

\begin{thebibliography}{10}

\bibitem{segarra2017network}
Santiago Segarra, Antonio~G. Marques, Gonzalo Mateos, and Alejandro Ribeiro,
\newblock ``Network topology inference from spectral templates,''
\newblock {\em IEEE Trans. Signal and Info. Process. over Networks}, vol. 3,
  no. 3, pp. 467--483, 2017.

\bibitem{dong2016learning}
Xiaowen Dong, Dorina Thanou, Pascal Frossard, and Pierre Vandergheynst,
\newblock ``Learning {Laplacian} matrix in smooth graph signal
  representations,''
\newblock {\em IEEE Trans. Signal Process.}, vol. 64, no. 23, pp. 6160--6173,
  2016.

\bibitem{mateos2019connecting}
Gonzalo Mateos, Santiago Segarra, Antonio~G. Marques, and Alejandro Ribeiro,
\newblock ``Connecting the dots: Identifying network structure via graph signal
  processing,''
\newblock {\em IEEE Signal Process. Mag.}, vol. 36, no. 3, pp. 16--43, 2019.

\bibitem{roddenberry2021network}
T~Mitchell Roddenberry, Madeline Navarro, and Santiago Segarra,
\newblock ``Network topology inference with graphon spectral penalties,''
\newblock in {\em IEEE Intl. Conf. Acoust., Speech and Signal Process.
  (ICASSP)}. IEEE, 2021, pp. 5390--5394.

\bibitem{navarro2022joint}
Madeline Navarro and Santiago Segarra,
\newblock ``Joint network topology inference via a shared graphon model,''
\newblock {\em IEEE Trans. Signal Process.}, vol. 70, pp. 5549--5563, 2022.

\bibitem{tenorio2025graph}
Victor~M Tenorio, Nicolas Zilberstein, Santiago Segarra, and Antonio~G Marques,
\newblock ``Graph guided diffusion: Unified guidance for conditional graph
  generation,''
\newblock {\em arXiv preprint arXiv:2505.19685}, 2025.

\bibitem{sharma2024diffuse}
Kartik Sharma, Srijan Kumar, and Rakshit Trivedi,
\newblock ``Diffuse, sample, project: plug-and-play controllable graph
  generation,''
\newblock in {\em Intl. Conf. on Machine Learning (ICML)}, 2024.

\bibitem{chung2023solving}
Hyungjin Chung, Dohoon Ryu, Michael~T. McCann, Marc~L. Klasky, and Jong~Chul
  Ye,
\newblock ``Solving 3{D} inverse problems using pre-trained 2{D} diffusion
  models,''
\newblock in {\em Proceedings of the IEEE/CVF Int. Conf. Comput. Vis. Pattern
  Recogn. (CVPR)}, 2023, pp. 22542--22551.

\bibitem{kim2025dual}
Minseo Kim, Axel Levy, and Gordon Wetzstein,
\newblock ``Dual ascent diffusion for inverse problems,''
\newblock in {\em Proceedings of the IEEE/CVF Int. Conf. Comput. Vis. Pattern
  Recogn. (CVPR)}, 2026.

\bibitem{tomasi2026primaldual}
Federico Tomasi, Dmitrii Moor, Alice Wang, and Mounia Lalmas,
\newblock ``Primal-dual guided decoding for constrained discrete diffusion,''
\newblock {\em arXiv preprint arXiv:2605.09749}, 2026.

\bibitem{hadou2026constrained}
Samar Hadou, Yigit~Berkay Uslu, and Alejandro Ribeiro,
\newblock ``Constrained diffusion models with primal-dual inference,''
\newblock {\em arXiv preprint arXiv:2606.17192}, 2026.

\bibitem{chen2026pifm}
Harvey Chen, Nicolas Zilberstein, and Santiago Segarra,
\newblock ``Prior-informed flow matching for graph reconstruction,''
\newblock {\em arXiv preprint arXiv:2601.22107}, 2026.

\bibitem{liuflow}
Xingchao Liu, Chengyue Gong, and Qiang Liu,
\newblock ``Flow straight and fast: Learning to generate and transfer data with
  rectified flow,''
\newblock in {\em Intl. Conf. Learn. Repr. (ICLR)}, 2023.

\bibitem{lipmanflow}
Yaron Lipman, Ricky~TQ Chen, Heli Ben-Hamu, Maximilian Nickel, and Matthew Le,
\newblock ``Flow matching for generative modeling,''
\newblock in {\em Intl. Conf. Learn. Repr. (ICLR)}, 2023.

\bibitem{blau2018perception}
Yochai Blau and Tomer Michaeli,
\newblock ``The perception-distortion tradeoff,''
\newblock in {\em Proceedings of the IEEE/CVF Int. Conf. Comput. Vis. Pattern
  Recogn. (CVPR)}, 2018, pp. 6228--6237.

\bibitem{freirich2021distortion}
Dror Freirich, Tomer Michaeli, and Ron Meir,
\newblock ``A theory of the distortion-perception tradeoff in {Wasserstein}
  space,''
\newblock in {\em Advances in Neural Inf. Process. Syst. (NeurIPS)}, 2021.

\bibitem{interpolant}
Michael~Samuel Albergo, Mark Goldstein, Nicholas~Matthew Boffi, Rajesh
  Ranganath, and Eric Vanden-Eijnden,
\newblock ``Stochastic interpolants with data-dependent couplings,''
\newblock in {\em Intl. Conf. on Machine Learning (ICML)}. PMLR, 2024, pp.
  921--937.

\bibitem{GraphSAGE10.5555/3294771.3294869}
William~L. Hamilton, Rex Ying, and Jure Leskovec,
\newblock ``Inductive representation learning on large graphs,''
\newblock in {\em Advances in Neural Inf. Process. Syst. (NIPS)}, 2017,
  NIPS'17, pp. 1025--1035.

\bibitem{wangneural}
Xiyuan Wang, Haotong Yang, and Muhan Zhang,
\newblock ``Neural common neighbor with completion for link prediction,''
\newblock in {\em Intl. Conf. Learn. Repr. (ICLR)}, 2024.

\bibitem{sigl}
Ali Azizpour, Nicolas Zilberstein, and Santiago Segarra,
\newblock ``Scalable implicit graphon learning,''
\newblock in {\em Int. Conf. on Artif. Intell. and Stat.}, 2025, pp.
  3952--3960.

\bibitem{martin2025pnpflow}
S{\'e}gol{\`e}ne Martin, Anne Gagneux, Paul Hagemann, and Gabriele Steidl,
\newblock ``{PnP-Flow}: Plug-and-play image restoration with flow matching,''
\newblock in {\em Intl. Conf. Learn. Repr. (ICLR)}, 2025.

\bibitem{calvofullana2024state}
Miguel Calvo-Fullana, Santiago Paternain, Luiz F.~O. Chamon, and Alejandro
  Ribeiro,
\newblock ``State augmented constrained reinforcement learning: Overcoming the
  limitations of learning with rewards,''
\newblock {\em IEEE Trans. Auto. Control}, vol. 69, no. 7, pp. 4275--4290,
  2024.

\bibitem{paternain2019constrained}
Santiago Paternain, Luiz F.~O. Chamon, Miguel Calvo-Fullana, and Alejandro
  Ribeiro,
\newblock ``Constrained reinforcement learning has zero duality gap,''
\newblock in {\em Advances in Neural Inf. Process. Syst. (NeurIPS)}, 2019.

\bibitem{chamon2020probably}
Luiz F.~O. Chamon and Alejandro Ribeiro,
\newblock ``Probably approximately correct constrained learning,''
\newblock in {\em Advances in Neural Inf. Process. Syst. (NeurIPS)}, 2020.

\bibitem{zinkevich2003online}
Martin Zinkevich,
\newblock ``Online convex programming and generalized infinitesimal gradient
  ascent,''
\newblock in {\em Intl. Conf. on Machine Learning (ICML)}, 2003, pp. 928--936.

\bibitem{mahdavi2012trading}
Mehrdad Mahdavi, Rong Jin, and Tianbao Yang,
\newblock ``Trading regret for efficiency: Online convex optimization with long
  term constraints,''
\newblock {\em J. Mach. Learn. Res.}, vol. 13, pp. 2503--2528, 2012.

\bibitem{TUDataset}
Christopher Morris, Nils~M. Kriege, Franka Bause, Kristian Kersting, Petra
  Mutzel, and Marion Neumann,
\newblock ``{TUDataset}: A collection of benchmark datasets for learning with
  graphs,''
\newblock in {\em ICML Work. on Graph Rep. Learning and Beyond}, 2020.

\end{thebibliography}

\end{document}